\newcolumntype{a}{>{\columncolor{gray!30}}c}
\newtheorem{theorem}{Theorem}[section]
\newtheorem{lemma}{Lemma}[section]
\newtheorem{definition}{Definition}[section]
\numberwithin{equation}{section}
\newcommand\vx{\mathbf{x}}
\newcommand\vy{\mathbf{y}}
\newcommand\vz{\mathbf{z}}
\newcommand\vw{\mathbf{w}}
\newcommand\va{\mathbf{a}}
\newcommand\vb{\mathbf{b}}
\newcommand\vc{\mathbf{c}}
\newcommand\vt{\mathbf{t}}
\newcommand\tupleof[1]{\left\langle #1 \right\rangle}
\newcommand\reals{\mathbb{R}}
\newcommand\integers{\mathbb{Z}}
\newcommand\scr[1]{\mathcal{#1}}
\newcommand\grad{\nabla}
\newcommand\sherlock{\textsc{sherlock}\xspace}
\begin{document}
\title{Output Range Analysis for Deep Neural Networks}

\newcommand{\superscript}[1]{\ensuremath{^{\textrm{#1}}}}

\newcommand{\noop}[1]{}

\newcommand{\poly}{\mathtt{poly}}
\newcommand{\range}{\mathtt{range}}

\author{Souradeep Dutta $^\ast$, Susmit Jha$^\star$, Sriram Sanakaranarayanan$^\dagger$, Ashish Tiwari$^\ddagger$ \\ $^\ast$souradeep.dutta@colorado.edu, $^\star$susmit.jha@sri.com, $^\dagger$ srirams@colorado.edu, $^\ddagger$ tiwari@csl.sri.com }
\maketitle
\vspace*{-0.4cm}
\begin{abstract}
  Deep neural networks (NN) are extensively used for machine learning
  tasks such as image classification, perception and control of
  autonomous systems.  Increasingly, these deep NNs are also been
  deployed in high-assurance applications.  Thus, there is a pressing
  need for developing techniques to verify neural networks to check
  whether certain user-expected properties are satisfied.  In this
  paper, we study a specific verification problem of computing a
  guaranteed range for the output of a deep neural network given a set
  of inputs represented as a convex polyhedron. Range estimation is a
  key primitive for verifying deep NNs.  We present an efficient range
  estimation algorithm that uses a combination of local search and
  linear programming problems to efficiently find the maximum and
  minimum values taken by the outputs of the NN over the given input
  set. In contrast to recently proposed ``monolithic'' optimization
  approaches, we use local gradient descent to repeatedly find and
  eliminate local minima of the function. The final global optimum is
  certified using a mixed integer programming instance. We implement
  our approach and compare it with Reluplex, a recently proposed
  solver for deep neural networks.
  We demonstrate the effectiveness of the proposed approach for verification
  of NNs used in automated control as well as those used in classification. 
\end{abstract}

\section{Introduction}\label{sec:intro}
Deep neural networks have emerged as a versatile and popular
representation model for machine learning due to their ability to
approximate complex functions and the efficiency of methods for
learning these from large data sets.  The black box nature of NN
models and the absence of effective methods for their analysis has
confined their use in systems with low integrity requirements but more
recently, deep NNs are also been adopted in high-assurance systems
such as automated control and perception pipeline of autonomous
vehicles~\cite{kahn2016plato}. While traditional system design
approaches include rigorous system verification and analysis
techniques to ensure the correctness of systems deployed in
safety-critical applications~\cite{Baier+Katoen/2008/Principles}, the
inclusion of complex machine learning models in the form of deep NNs
has created a new challenge to verify these models. 
In this paper, we focus on the \emph{range estimation problem},
wherein, given a neural network $N$ and a polyhedron $\phi(\vx)$
representing a set of inputs to the network, we wish to estimate a
range $\range(l_i, \phi)$ for each of the network's output $l_i$ that
subsumes all possible outputs and is ``tight'' within a given
tolerance $\delta$.  We restrict our attention to feedforward deep
NNs. Furthermore, the NNs are assumed to use only rectified linear
units (ReLUs)~\cite{lecun2015deep} as activation functions.  Adapting
to other activation functions will be discussed in an extended
version.

Despite these restrictions, the range estimation problem has several
applications. Consider a deep NN classification model with the last
layer neural units, before the soft-max computation to determine the
class label, denoted by $l_0, l_1, \ldots, l_k$.  Given an image $i$,
$l_j(i)$ denotes the value of the $j$-th last layer neural unit for
the input $i$ and $\poly(i)$ denote a compact polyhedron region around
$i$ modeling perturbations of the image $i$. The range of label $l_i$
for the polyhedron input $\poly(i)$ is denoted by
$\range(l_i,\poly(i))$.  Label $j$ is possible for a perturbation of
input in $\poly(i)$ if and only if
$(\range(l_0) \times \range(l_1) \times \ldots \times \range(l_k))
\cap \{\forall k \; l_k \leq l_j\}$ is not empty.  Hence, we can
verify if a label $j$ is possible for a given set of perturbations of
image $i$ by solving the range estimation problem. This will establish
the robustness of the deep NN classification models. Another
application of the {range estimation problem} is to prove the safety
of deep NN controllers by proving bounds on the outputs generated by
these models.  This is important because out of bounds outputs can
drive the physical system into undesirable configurations such as the
locking of robotic arm, or command a car's throttle beyond its rated
limits. Finding these errors through verification will enable
design-time detection of potential failures instead of relying on
runtime monitoring which can have significant overhead and also may
not allow graceful recovery. Additionally, range analysis can be
useful in proving the safety of the overall system.

\subsubsection{Related Work} 
The importance of analytical certification methods for neural networks
has been well-recognized in literature.  \cite{kurd2003establishing}
present one of the first categorization of verification goals for NNs
used in safety-critical applications.  The proposed approach here
targets criteria G4 and G5 in \cite{kurd2003establishing} which aim at
ensuring robustness of NNs to disturbances in inputs, and ensuring the
output of NNs are not hazardous.  The verification of neural networks
is a hard problem, and even proving simple properties about them is
known to be NP-complete~\cite{Reluplex}.  The nonlinearity and
non-convexity of deep NNs make their analysis very difficult.  

There
has been few recent results reported for verifying neural networks.  A
methodology for the analysis of ReLU feed-forward networks is studied
in \cite{Reluplex}.  Their approach relies on Satisfiability Modulo
Theory (SMT) solving~\cite{barrett2009satisfiability}, while we only
use a combination of gradient-based local search and MILP
solving~\cite{bixby2012brief}.  The linear programming used for
comparison in Reluplex~\cite{Reluplex} performs significantly less
efficiently according to the experiments reported in this paper. Note,
however, that the scenarios used by Katz et al. are different from
those studied here, and are not publicly available for comparison.  A
related goal of finding adversarial inputs for deep NNs has received a
lot of attention, and can be viewed as a testing approach to NNs
instead of verification method discussed in this paper. A linear
programming based approach for finding adversarial inputs is presented
in \cite{bastani2016measuring}. A related approach for finding
adversarial inputs using SMT solvers that relies on a layer-by-layer
analysis is presented in \cite{HuangKWW16}. The use of SMT solvers for
analysis of NNs has also been studied in \cite{Pulina2012}.  In
contrast, our goal is to not just find adversarial or failing inputs
that violate some property of the output but instead, we aim at
establishing the guaranteed range of the output for a given polyhedral
region of possible inputs.  An abstraction-refinement based iterative
approach for verification of NNs was proposed in
\cite{pulina2010abstraction}. It relies on abstracting NNs as a
Boolean combinations of linear arithmetic SMT constraints that is
guaranteed to be conservative. Spurious counterexamples arising due to
abstraction are used to refine the encoding. Note that our approach
can  fit well inside such an abstraction-refinement
framework.

\subsubsection{Contributions} 

We present a novel algorithm for
propagating convex polyhedral inputs through a feedforward deep neural
network with ReLU activation units to establish ranges s for the
outputs of the network. We have implemented our approach in a tool
called \sherlock. We compare \sherlock with a recently proposed deep
NN verification engine - Reluplex~\cite{Reluplex}.  We demonstrate the
application of \sherlock to establish output range of
deep NN controllers as well as to prove the robustness of deep NN
image classification models.  Our approach seems to scale
\emph{consistently} to randomly generated sparse networks with up to
$250$ neurons and random dense networks with up to $100$ neurons.
Furthermore, we demonstrate its applications on neural networks with
as many as $1500$ neurons.

\section{Preliminaries}\label{sec:background}
We present the preliminary notions including  deep neural networks,
 polyhedra, and mixed integer linear programs.

\subsection{Deep Neural Networks}

We will study feed forward neural networks (NN) using so-called
``ReLU'' units throughout this paper with $n > 0$ inputs and a single
output. Let $\vx \in \reals^n$ denote the inputs and $y \in \reals$ be
the output of the network. Structurally, a NN $\scr{N}$ consists of $k > 0$
hidden layers wherein we assume (for simplicity) that each layer has
$N > 0$ neurons. We use $N_{ij}$ to denote the $j^{th}$ neuron
of the $i^{th}$ layer for $j \in [1,N]$ and $i \in [1,k]$.

A $k$ layer neural network with $N$ neurons in each hidden layer is
described by a set of matrices:
\[ (W_0, \vb_0), \ \ldots\ , (W_{k-1}, \vb_{k-1}), (W_k, \vb_k) \,,\]
wherein
(a) $W_0, \vb_0$ are $ N\times n$ and $N \times 1$ matrices
denoting the weights connecting the inputs to the first hidden layer,
(b) $W_{i}, \vb_i$ for $i \in [1, k-1]$ connect layer $i$ to
layer $i+1$ and (c) $W_k, \vb_k$ connect the last layer $k$
to the output.

\begin{definition}[ReLU Unit]
  Each neuron in the network implements a nonlinear function $\sigma$ linking its
  input value to the output value. In this paper, we consider ReLU units that
  implement the function $\sigma(z):\ \max(z , 0)$.

  We extend the definition of $\sigma$ to apply component-wise to
  vectors $\vz$ as
  $\sigma(\vz): \left(\begin{array}{c} \sigma(z_1)\\ \vdots \\
      \sigma(z_n) \end{array}\right)$.
\end{definition}

Taking $\sigma$ to be the ReLU function, we describe the overall function defined
by a given network $\scr{N}$.

\begin{definition}[Function Computed by NN]
  Given a neural network $\scr{N}$ as described above, the function $F: \reals^n \rightarrow \reals$
  computed by the neural network is given by the composition  $F := F_k \circ \cdots \circ F_0$
  wherein $F_i(\vz): \sigma(W_i \vz + \vb_i)$ is the function computed by the $i^{th}$ hidden layer
  with $F_0$ denoting the function linking the inputs to the first layer and $F_k$ linking the
  last layer to the output.
\end{definition}

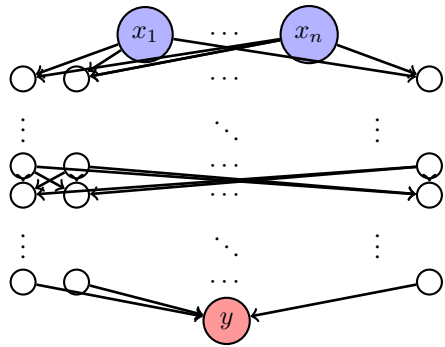
\begin{figure}[t]
  \begin{center}
    \begin{tikzpicture}
      \matrix[netNode/.style={circle, draw=black, thick}, row sep = 0pt, column sep = 10pt]{
        & & \node[netNode,fill=blue!30](n1){$x_1$}; &   \node{$\cdots$};  &    \node[netNode,fill=blue!30](nn){$x_n$}; \\
        \node[netNode](m1){}; & \node[netNode](m2){}; &  & \node{$\cdots$}; &  & &  \node[netNode](mN){}; \\
        \node{$\vdots$}; &   & & \node{$\ddots$}; && \node{$\vdots$}; \\
        \node[netNode](o1){}; & \node[netNode](o2){}; &  & \node{$\cdots$}; &  &  & \node[netNode](oN){}; \\
        \node[netNode](p1){}; & \node[netNode](p2){}; &  & \node{$\cdots$}; &  &  & \node[netNode](pN){}; \\
        \node{$\vdots$}; &   & & \node{$\ddots$}; && \node{$\vdots$}; \\
        \node[netNode](q1){}; & \node[netNode](q2){}; &  & \node{$\cdots$}; &  &  & \node[netNode](qN){}; \\
        & & & \node[netNode,fill=red!40](o){$y$}; \\
      };
      \path[->, line width=1pt] (n1) edge (m1)
      (nn) edge (m1)
      (nn) edge  (m2)
      (n1) edge (m2)
      (nn) edge (m2)
      (n1) edge  (mN)
      (nn) edge  (mN)
      (o1) edge (p1)
      (o1) edge (p2)
      (o1) edge (pN)
      (o2) edge (p1)
      (o2) edge (p2)
      (o2) edge (pN)
      (oN) edge (p1)
      (oN) edge (p2)
      (oN) edge (pN)
      (q1) edge (o)
      (q2) edge (o)
      (qN) edge (o);
      \end{tikzpicture}
  \end{center}
  \caption{Feedforward Neural Network with ReLU Units.}\label{Fig:feedforward-network-schema}
\end{figure}

For a fixed input $\vx$, we say that a neuron $N_{ij}$ is activated if
the value input to it is nonnegative. This corresponds to the output
of the ReLU unit being the same as the input. It is easily seen that
the function $F$ computed by a NN $\scr{N}$ is continuous and
piecewise affine, and differentiable almost everywhere in $\reals^n$.
If it exists, we denote the gradient of this function
$\grad F: ( \partial_{x_1}F,\ \ldots,\ \partial_{x_n}F)$. Computing
the gradient can be performed efficiently (as described subsequently).

\subsection{Mixed Integer Linear Programs}

Throughout this paper, we will formulate linear optimization problems
with integer variables. We briefly recall these optimization problems,
their computational complexity and solution techniques used in practice.

\begin{definition}
  A mixed integer linear program (MILP) involves
  a set of real-valued variables $\vx$ and integer valued variables $\vw$
  of the following form:
  \[ \begin{array}{rcl}
       \max & \va^T \vx + \vb^T \vw \\
       \mathsf{s.t.} & A \vx + B \vw \leq \vc \\
            & \vx \in \reals^n\\
            & \vw \in \integers^m\\
       \end{array}\]
   \end{definition}

   The problem is called a linear program (LP) if there are no integer
   variables $\vw$. The special case wherein $\vw \in \{0,1\}^m$ is
   called a binary MILP. Finally, the case without an explicit
   objective function is called a MILP feasibility problem.
   It is well known that MILPs are NP-hard problems: the best known
   algorithms, thus far, have exponential time complexity in the worst
   case. In contrast, LPs can be solved efficiently using interior
   point methods.

\section{Problem Definition}\label{sec:problem}
Let $\scr{N}$ be a neural network with $n$ inputs $\vx$, a single
output $y$ and weights $\tupleof{(W_0,\vb_0), \ldots, (W_k,
  \vb_k)}$. Let $F_N$ be the function defined by such a network.

\begin{definition}[Range Estimation Problem]
The range estimation problem is defined as follows:
\begin{itemize}
  \item\textsc{Inputs:} Neural network $\scr{N}$, input constraints $P: A \vx \leq \vb$ and a tolerance parameter $\delta > 0$.
  \item\textsc{Output:} An interval $[\ell, u]$ such that
  $ (\forall\ \vx \in P)\ F_N(\vx) \in [\ell, u]$.
  I.e, $[\ell,u]$ contains the range of $F_N$ over inputs $\vx \in P$.
  Furthermore, we require the interval to be \emph{tight}:
  \[ (\max_{\vx \in P}\ F_N(\vx) \geq u - \delta ),\ (\min_{\vx \in P}\ F_N(\vx) \leq \ell + \delta) \,.\]
\end{itemize}
\end{definition}

We will assume that the input polyhedron $P$ is compact: i.e, it is
closed and has a bounded volume.

\subsection{Overall Approach}

Without loss of generality, we will focus on estimating the upper
bound $u$. The case for the lower bound will be entirely
analogous. First, we note that a single MILP can be used to directly
compute $u$, but can be quite expensive in practice.  Therefore, our
approach will combine a series of MILP feasibility
problems alternating with local search steps.

\begin{figure}[t]
\begin{center}
  \begin{tikzpicture}[scale=1.0]
  \draw[fill=red!50, fill opacity=0.3, draw=red!50] (0,0) rectangle (8,0.85);
  \draw[fill=red!20, fill opacity=0.3, draw=red!20] (0,0.85) rectangle (8,2.43);

  \draw[line width=1.0, blue] plot[smooth, tension=0.5] coordinates {(0,0) (0.4,0.2) (1.0,0.4) 
                                                                      (1.4,0.7) (2,0.8) (2.2,0.5) (3,1.2) (4.0, 2.4) (4.2,2.2) 
                                                                      (4.4,2.0) (4.6,1.7) (4.8,1.5) (5,1.2) (6,4) (6.5,3.5) (7,2.8) (8,2.0) };

  \node[circle,fill=black,inner sep=2pt, minimum size=0pt](n0) at (0.4,0.2) {};
    \node[circle,fill=black,inner sep=2pt, minimum size=0pt](n2) at (1.4,0.7) {};
  \node[circle,fill=black,inner sep=2pt, minimum size=0pt](n3) at (2.0,0.8) {};
  \node[circle,fill=black,inner sep=2pt, minimum size=0pt](n4) at (4.8,1.5) {};
  \node[circle,fill=black,inner sep=2pt, minimum size=0pt](n5) at (4.4,2.0) {};
  \node[circle,fill=black,inner sep=2pt, minimum size=0pt](n7) at (4.0,2.4) {};
  \node[circle,fill=black,inner sep=2pt, minimum size=0pt](n8) at (7,2.8) {};
  \node[circle,fill=black,inner sep=2pt, minimum size=0pt](n9) at (6.5,3.5) {};
  \node[circle,fill=black,inner sep=2pt, minimum size=0pt](n10) at (6,4) {};

  \node(m0) at (0.4,-0.2) {$\vx_0$};
  \node(m2) at (1.4,-0.2) {$\vx_1$};
  \node(m3) at (2.0,-0.2) {$\vx_2$};
  \node(m4) at (4.8,-0.2) {$\vx_3$};
  \node(m5) at (4.4,-0.2) {$\vx_4$};
  \node(m7) at (4.0,-0.2) {$\vx_5$};
  \node(m8) at (7,-0.2) {$\vx_6$};
  \node(m9) at (6.5,-0.2) {$\vx_7$};
  \node(m10) at (6,-0.2) {$\vx_8$};

  \path[dotted,thin] (m0) edge (n0)
  (m2) edge (n2)
  (m3) edge (n3)
  (m4) edge (n4)
  (m5) edge (n5)
  (m7) edge (n7)
  (m8) edge (n8)
  (m9) edge (n9)
  (m10) edge (n10);

  \path[->, thick, black,dashed] (n0) edge[bend left] node[above,black]{$L_1$} (n2)
  (n2) edge [bend right] node[above,black]{$L_2$} (n3)
  (n4) edge [bend right] node[right,black]{$L_3$} (n5)
  (n5) edge [bend right] node[right,black]{$L_4$} (n7)
  (n8) edge [bend right] node[right,black]{$L_5$} (n9)
  (n9) edge [bend right] node[right,black]{$L_6$} (n10);

  \filldraw [draw=white,thick,pattern=north west lines, pattern color=red](0,4.2) rectangle (8,4.5);
  \draw[thick,red] (0,4.2)-- (8,4.2);
  \node at (-0.2,4.2) {$u^*$};
  \node at (-0.2,0.8) {$u_2$};
  \node at (-0.2,2.4) {$u_1$};

  \path[->, black, thick, dashed] (n3) edge node[below]{$G_1$} (n4)
  (n7) edge node[above]{$G_2$} (n8);


  \draw[->, line width=1.5] (0,0) -- (8.2,0);
  \draw[->, line width=1.5] (0,0) -- (0,4.5);
  \end{tikzpicture}
\end{center}
\caption{A schematic figure showing our approach showing alternating
  series of local search $L_1, \ldots, L_6$ and ``global search''
  $G_1, G_2$ iterations. The points $\vx_2, \vx_5, \vx_8$ represent local
  minima wherein our approach transitions from local search iterations to global search iterations. }\label{Fig:approach-glance}
\vspace*{-0.5cm}
\end{figure}
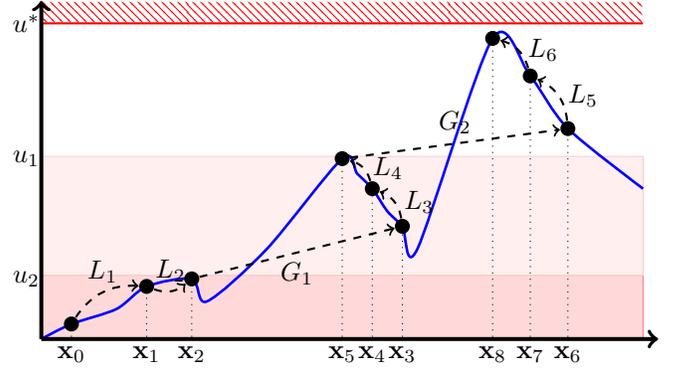

Figure~\ref{Fig:approach-glance} shows a pictorial representation of
the overall approach.  The approach incrementally finds a series of
approximations to the upper bound $ u_1 < u_2 < \cdots < u^* $,
culminating in the final bound $u = u^*$.

\begin{enumerate}
  \item The first level $u_1$ is found by choosing a randomly sampled point $\vx_0 \in P$.
  \item Next, we perform a series of local iteration steps resulting in
  samples $\vx_1, \ldots, \vx_{i}$ that perform gradient ascent
  until these steps cannot obtain any further improvement. We take $u_1 = F_N(\vx_i)$.
  \item A  ``global search'' step is now performed to check if there is any point $\vx \in P$ such that $F_N(\vx) \geq u_1 + \delta$.
     This is obtained by solving a MILP feasibility problem.
  \item If the global search fails to find a solution, then we declare $u^* = u_1 + \delta$.
  \item Otherwise, we obtain a new \emph{witness} point $\vx_{i+1}$ such that $F_N(\vx_{i+1}) \geq u_1 + \delta$.
  \item We now go back to the local search step.
\end{enumerate}

\begin{algorithm}
  \caption{Estimate maximum value $u$ for a neural network $\scr{N}$ over a range $\vx \in P$ with tolerance $\delta > 0$.}\label{Algo:overall-algorithm}
  \begin{algorithmic}[1]
  \Procedure{FindUpperBound}{$\scr{N}$, $P$, $\delta$}
  \State $\vx\ \leftarrow$\ \textbf{Sample}($P$) \label{line:initial-sample}
  \State terminate $\leftarrow$ false
  \While{not terminate}\label{line:while-loop}
     \State $(\vx, u)\ \leftarrow\ $ \textbf{LocalSearch}($\scr{N},\ \vx,\ P$) \label{line:local-search}
     \State $u\ \leftarrow\ u + \delta$ \label{line:estimate-increment}
     \State $(\vx', u', feasible)\ \leftarrow$ \textbf{GlobalSearch}($\scr{N},\ u,\ P$) \label{line:global-search}
     \If{feasible} \label{line:check-feasibility}
      \State $(\vx, u)\ \leftarrow\ (\vx',u')$ \label{line:update-estimates}
     \Else
      \State terminate $\leftarrow$ true
      \EndIf
  \EndWhile
  \State \Return $(\vx, u)$
  \EndProcedure
\end{algorithmic}
\end{algorithm}

Algorithm~\ref{Algo:overall-algorithm} presents the pseudocode for the
overall algorithm.  The procedure starts by generating a sample from
the interior of $P$ (line~\ref{line:initial-sample}) using an
interior-point LP solver. Next, we iterate between local search
iterations (line~\ref{line:local-search}) and global search
(line~\ref{line:global-search}). Each local search call provides a new
point $\vx \in P$ and $u = F_N(\vx)$ that is deemed a local minimum of
the search procedure.  We increment the current estimate $u$ by
$\delta$, the given tolerance parameter
(line~\ref{line:estimate-increment}) and ask the global solver to find
a new point $\vx'$ such that $u' = F_N(\vx')$ is at least as large as
the current value of $u$ (line~\ref{line:global-search}). If this
succeeds, then we update the current values of $\vx, u$
(line~\ref{line:update-estimates}). Otherwise, we conclude that the
current value of $u$ is a valid upper bound within the tolerance
$\delta$.

We assume that the procedures \textbf{LocalSearch} and
\textbf{GlobalSearch} satisfy the following properties:
\begin{itemize}
\item \textbf{(P1)} Given a point $\vx \in P$, the \textbf{LocalSearch} procedure
  returns  $\vx' \in P$ such that $F_N(\vx') \geq F_N(\vx)$.

\item \textbf{(P2)} Given a neural network $\scr{N}$ and the current estimate $u$, the \textbf{GlobalSearch}
procedure either declares \texttt{feasible} along with $\vx' \in P$
such that $F_N(\vx') \geq F_N(\vx)$, or declares \texttt{not feasible}
if no $\vx' \in P$ satisfies $F_N(\vx') \geq F_N(\vx)$.
\end{itemize}

We recall the basic assumptions thus far: (a) $P$ is compact, (b) $\delta > 0$ and (c) properties
\textbf{P1, P2} apply to the \textbf{LocalSearch} and \textbf{GlobalSearch} procedures.
Let us denote the ideal upper bound by $u^*:\ \max_{\vx \in P} F_N(\vx)$.
\begin{theorem}
  Algorithm~\ref{Algo:overall-algorithm} always terminates. Furthermore, the
  output $u$ satisfies $ u \geq u^*\ \mbox{and}\ u \leq u^*+\delta$.
\end{theorem}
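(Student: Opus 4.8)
The plan is to prove the two assertions separately, termination first and then the pair of bounds $u^* \le u \le u^* + \delta$, both of which rest on two elementary observations: that $u^* := \max_{\vx \in P} F_N(\vx)$ is a finite real number, and that the value $F_N(\vx)$ of the current iterate strictly increases by at least $\delta$ on every loop iteration that does \emph{not} terminate. First I would pin down that $u^*$ is well defined: since $P$ is compact by assumption and $F_N$ is continuous (being piecewise affine, as noted after the definition of the function computed by the network), the extreme value theorem guarantees the maximum is attained and finite. Moreover every iterate $\vx$ arising in the run—whether from \textbf{Sample}, from \textbf{LocalSearch} (property P1), or from \textbf{GlobalSearch} (property P2)—lies in $P$, so $F_N(\vx) \le u^*$ holds throughout.

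For termination I would track $w := F_N(\vx)$ evaluated just after each \textbf{LocalSearch} call. Tracing one iteration: \textbf{LocalSearch} returns $(\vx, u)$ with $u = w = F_N(\vx)$; line~\ref{line:estimate-increment} sets $u \leftarrow w + \delta$; and \textbf{GlobalSearch} is invoked at the level $w + \delta$. If it reports \texttt{feasible}, P2 supplies an $\vx'$ whose value is at least the queried level $w + \delta$, and since the next local search can only increase the value (P1), the post-\textbf{LocalSearch} value of the following iteration is at least $w + \delta$. Hence these values increase by at least $\delta$ across successive non-terminating iterations. Combined with the uniform ceiling $u^*$ from the first step, at most $(u^* - w_1)/\delta$ such iterations can occur, where $w_1$ is the value after the first local search; because $\delta > 0$ this count is finite, so the loop must eventually take the \texttt{not feasible} branch and stop.

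For the correctness bounds I would analyze the terminating iteration. There the returned value is $u = w + \delta$ with $w = F_N(\vx) \le u^*$ (using $\vx \in P$ once more), which immediately gives $u \le u^* + \delta$. For the lower bound, termination means \textbf{GlobalSearch} declared \texttt{not feasible} at level $u$, so by P2 no $\vx' \in P$ satisfies $F_N(\vx') \ge u$; thus $F_N(\vx') < u$ for every $\vx' \in P$, whence $u^* < u$ and in particular $u \ge u^*$. Together these establish the claimed interval.

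The argument is mostly mechanical given P1 and P2; the two points worth care are the following. The genuinely non-routine ingredient is recognizing that finiteness of $u^*$—and hence compactness of $P$ together with continuity of $F_N$—is precisely what prevents the forced $\delta$-increments from continuing indefinitely, so that is where I expect the main content of the termination proof to sit. A secondary subtlety is that property P2 as written compares $F_N(\vx')$ against $F_N(\vx)$, whereas in the loop \textbf{GlobalSearch} is passed the \emph{incremented} estimate $u = w + \delta$; I would flag this and use the contextually intended reading (comparison against the queried level $u$), since it is exactly the guaranteed $\delta$ of progress per feasible step that drives the termination bound and that underlies the final lower bound $u > u^*$.
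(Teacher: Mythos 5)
Your proof is correct and follows essentially the same route as the paper's: termination via the forced increase of at least $\delta$ per non-terminating iteration, bounded by the finite maximum $u^*$ guaranteed by compactness of $P$ and continuity of $F_N$; the lower bound $u \geq u^*$ from P2's infeasibility at the final level; and the upper bound $u \leq u^* + \delta$ from the last local-search value $u - \delta = F_N(\vx_n) \leq u^*$. Your flag about P2 is apt --- as written it compares $F_N(\vx')$ against $F_N(\vx)$ rather than the queried estimate $u$, and the paper's own proof silently uses the corrected reading (concluding $(\forall\ \vx \in P)\ F_N(\vx) \leq u$ from infeasibility), exactly as you do.
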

\begin{proof}
  Since $P$ is compact and $F_N$ is a continuous
  function. Therefore, the maximum is always attained.

    To see why it terminates, we note that the value of $u$ increases by at least
    $\delta$ each time we execute the loop body of the While loop in line~\ref{line:while-loop}.
    Furthermore, letting $u_0$ be the value of $u$ attained by the sample obtained in line~\ref{line:initial-sample},
    we can upper bound the number of steps by $\left\lceil\frac{(u^*-u_0)}{\delta}\right\rceil$.

    We note that the procedure terminates only if \textbf{GlobalSearch} returns infeasible.
    Therefore, appealing to property \textbf{P2}, we note that
    $ (\forall\ \vx \in P)\ F_N(\vx) \leq u$.
    Or in other words, $u^* \leq u$.

    Likewise, consider the value of $u$ returned by the call to \textbf{LocalSearch} in the last
    iteration of the loop, denoted as $u_n$ along with the point $\vx_n$. We note that $F_N(\vx_n) = u_n$.
    Therefore, we have $ u_n \leq u^* \leq u$.
    However, $u_n = u - \delta$. This completes the proof.
\end{proof}

\section{Local and Global Search}\label{sec:solution}
In this section, we will describe the local and global search algorithms used in our approach
for estimating upper bounds. The approach for estimating lower bounds is identical replacing
ascent steps with descent steps.

\subsection{Local Search Technique}

The local search algorithm uses a steepest projected gradient ascent
algorithm, starting from an input sample point $\vx_0 \in P$ and
$u = F_N(\vx_0)$, iterating through a sequence  of points
$(\vx_0, u_0), (\vx_1, u_1), \ldots, (\vx_n, u_n)$, such that
$\vx_i \in P$ and $u_0 < u_1 < u_2 < \cdots < u_n$.

The new iterate $\vx_{i+1}$ is obtained from  $\vx_i$ as follows:
\begin{compactenum}
\item Compute the gradient $\vz_i:\ \grad F_N(\vx_i)$.
\item Compute a ``locally active region'' $\scr{L}(\vx_i)$.
\item Solve a linear program (LP) to compute $\vx_{i+1}$.
\end{compactenum}

We now describe the calculation of $\vz_i$, the definition of a
``locally active region'' and the setup of the LP.

\paragraph{Gradient Calculation:} 
Technically, the gradient of $F_N(\vx)$ need not exist for each input
$\vx$. However, this happens for a set of points of measure $0$, and
is dealt with in practice by using a smoothed version of the function
$\sigma$ defining the ReLU units.

The computation of the gradient uses the chain rule to obtain the gradient
as a product of matrices:
\[ \vz:\ J_0 \times J_1 \times \cdots \times J_k \,,\]
wherein $J_i$ represents the Jacobian matrix of partial derivatives
of the output of the $(i+1)^{th}$ layer $\vz_{i+1}$ with respect to those of the
$i^{th}$ layer $\vz_i$.  Since $\vz_{i+1} = \sigma (W_i \vz_i + \vb_i)$ we
can compute the gradient $J_i$ using the following simple rule:
\begin{compactenum}
\item If the $j^{th}$ entry $\vz_{i+1,j} \geq 0$ then copy  $j^{th}$ row of $W_i$
to $J_i$.
\item Otherwise, set the $j^{th}$ row of $J_i$ to be all zeros.
\end{compactenum}
In practice, the gradient calculation can be \emph{piggybacked} onto the function
evaluation $F_N(\vx)$ so that we obtain both the output $u$ and the gradient
$\grad F_N(\vx)$.

First order optimization approaches present numerous rules such as the
Armijo step sizing rules for calculating the step
size~\cite{Luenberger/1969/Optimization}. However, in our approach we
exploit the local linear nature of the function $F_N$ around the
current sample input $\vx$ to setup an LP.

\begin{definition}[Locally Active Region]
  For an input $\vx$ to the neural network $\scr{N}$, the locally
  active region $\scr{L}(\vx)$ describes the set of all inputs $\vx'$
  such that $\vx'$ activates exactly the same neurons as $\vx$.
\end{definition}

Given the definition of a locally active region, we obtain the following property.

\begin{lemma}
  The region $\scr{L}(\vx)$ is described by a polyhedron with possibly
  strict inequality constraints.  If $\vx' \in \scr{L}(\vx)$, then
  $\grad F_N(\vx) = \grad F_N(\vx')$.
\end{lemma}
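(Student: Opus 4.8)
The plan is to establish the two claims separately, both by exploiting the piecewise-affine structure of $F_N$. For the polyhedral description I will induct on the layers, and the conceptual crux is to express the activation status of a deep neuron as an affine constraint in the input $\vx'$ even though, a priori, the pre-activation of that neuron is only a piecewise-affine function of $\vx'$. For the gradient equality I will appeal directly to the Jacobian-product characterization of $\grad F_N$ recalled above.

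For the first claim, write $\hat{\vz}_i(\vx') := W_{i-1}\vz_{i-1}(\vx') + \vb_{i-1}$ for the vector of pre-activation values entering layer $i$, so that neuron $N_{ij}$ is active at $\vx'$ exactly when $\hat{\vz}_{i,j}(\vx') \geq 0$, and let $m_{ij} \in \{0,1\}$ record the activation pattern of $\vx$ itself. I would define the partial region $\scr{L}_i(\vx)$ as the set of inputs whose activation pattern matches that of $\vx$ on layers $1,\ldots,i$, and prove by induction on $i$ that (a) $\scr{L}_i(\vx)$ is cut out by finitely many affine inequalities (strict where $m_{ij}=0$, non-strict where $m_{ij}=1$), and (b) on $\scr{L}_i(\vx)$ the layer-$i$ output $\vz_i$ agrees with a fixed affine function of $\vx'$. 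The base case $i=1$ is immediate because $\hat{\vz}_1 = W_0\vx'+\vb_0$ is globally affine, so the $N$ sign constraints $\hat{\vz}_{1,j} \geq 0$ or $< 0$ are genuine affine constraints. For the inductive step, the masked expression $\tilde{\vz}_i := \mathrm{diag}(m_{i,\cdot})\,\hat{\vz}_i$ is a globally affine function of $\vx'$ that coincides with the true layer-$i$ output precisely on $\scr{L}_i(\vx)$; hence $\hat{\vz}_{i+1} = W_i\tilde{\vz}_i + \vb_i$ is affine in $\vx'$, and intersecting $\scr{L}_i(\vx)$ with the sign constraints on its coordinates yields $\scr{L}_{i+1}(\vx)$ as a polyhedron. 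Taking $\scr{L}(\vx)=\scr{L}_k(\vx)$ gives the claim.

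For the second claim, recall that wherever it exists the gradient factors as $\grad F_N(\vx') = J_0 J_1 \cdots J_k$, and that each Jacobian $J_i$ is obtained from $W_i$ by copying the $j$-th row when neuron $N_{i+1,j}$ is active and zeroing it otherwise. Since every $\vx' \in \scr{L}(\vx)$ activates exactly the same neurons as $\vx$ by definition, each $J_i$ evaluated at $\vx'$ equals its value at $\vx$, so the products agree and $\grad F_N(\vx') = \grad F_N(\vx)$. Equivalently, part (b) of the induction shows that $F_N$ restricts to a single affine function on $\scr{L}(\vx)$, whose gradient is the constant vector in question.

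The main obstacle is the apparent circularity noted above: the constraint that a layer-$(i+1)$ neuron keep its sign is affine in $\vx'$ only after the earlier activations are frozen, yet the region over which they are frozen is itself what we are trying to describe. The device that breaks the circle is the fixed mask $m_{i,\cdot}$: replacing the true ReLU output by the globally affine expression $\tilde{\vz}_i$ produces constraints that are affine everywhere, while the induction hypothesis guarantees $\tilde{\vz}_i$ agrees with the genuine layer-$i$ output on $\scr{L}_i(\vx)$, so no spurious points are admitted. A minor secondary point is that strictness must be tracked carefully --- ``active'' is $\geq 0$ and ``inactive'' is $< 0$ --- which is exactly why the statement allows strict inequalities and why $\scr{L}(\vx)$ need not be closed.
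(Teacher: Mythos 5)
Your proof is correct and follows essentially the same route as the paper's (sketch) proof: induction layer by layer, writing sign constraints for each neuron's activation status, with the gradient equality following because the Jacobian factors $J_i$ are determined entirely by the activation pattern. Your version is more detailed than the paper's --- in particular, the fixed-mask device $\tilde{\vz}_i = \mathrm{diag}(m_{i,\cdot})\,\hat{\vz}_i$ makes explicit why the deeper-layer constraints are genuinely affine in $\vx'$, a point the paper's sketch passes over silently --- but it is the same argument, carefully completed.
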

\begin{proof} (Sketch).  To see why $\scr{L}(\vx)$ is a polyhedron, we
  proceed by induction on the number of hidden layers at the
  network. Let $HN_1$ be the subset of neurons in the first hidden
  layer that are activated by $\vx$. We can write linear inequalities
  that ensure that for any other input $\vx'$, the inputs to the
  neurons in $HN_1$ are $ \geq 0$ and likewise the inputs to the
  neurons not activated in the first layer are $ < 0$. Proceeding
  layer by layer, we can write a series of constraints that describe
  $\scr{L}(\vx)$. Since the gradient is entirely dictated by the set
  of active neurons, it is clear that
  $\grad F_N(\vx) = \grad F_N(\vx')$.
\end{proof}

Let $\overline{\scr{L}}(\vx)$ denote the closure of the local active set
by converting the strict $>$ constraints to their non-strict $\geq$ versions.
Therefore, the local maximum is simply obtained by solving the following LP.
\[ \max \  \vw^T \vy \ \mathsf{s.t.}\ 
          \vy \in \overline{\scr{L}}(\vx) \cap P \,.\]
 The solution of the LP above yields a step of the local
 search.

 \paragraph{Termination:} The local search is terminated when each
 step no longer provides a sufficient increase, or alternatively the
 length of each step is deemed too small. These are controlled by user
 specified thresholds in practice. Another termination criterion
 simply stops the local search when a preset maximum number of
 iterations is exceeded. In practical implementations, all three
 criteria are used.

We note that local search described thus far satisfies property \textbf{P1} recalled below.
\begin{lemma}
  Given a point $\vx \in P$, the \textbf{LocalSearch} procedure
  returns a new $\vx' \in P$, such that $F_N(\vx') \geq F_N(\vx)$.
\end{lemma}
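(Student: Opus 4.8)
The plan is to reduce the multi-step procedure to a single-step guarantee and then chain. Since \textbf{LocalSearch} returns the last iterate $\vx' = \vx_n$ of the sequence $(\vx_0,u_0),\ldots,(\vx_n,u_n)$ with $\vx_0 = \vx$, it suffices to establish the per-step inequality $F_N(\vx_{i+1}) \geq F_N(\vx_i)$ together with $\vx_{i+1} \in P$, and then telescope from $i=0$ to $i=n-1$. The weak inequality also automatically covers the degenerate terminal case in which no progress is made and $\vx' = \vx_i$.

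First I would appeal to the preceding lemma, which gives that $\grad F_N$ is constant on the locally active region $\scr{L}(\vx_i)$, equal to $\vw := \vz_i = \grad F_N(\vx_i)$. Because every neuron has a fixed activation status throughout $\scr{L}(\vx_i)$, each ReLU acts as either the identity or the zero map, so the composition $F_N$ restricts to a single affine function $F_N(\vy) = \vw^T \vy + c$ on this region, where $c = F_N(\vx_i) - \vw^T \vx_i$. I would then extend this representation to the closure $\overline{\scr{L}}(\vx_i)$ by continuity: $F_N$ is continuous and piecewise affine on all of $\reals^n$, so its values on the boundary of the region are determined as limits of interior values and must coincide with the unique affine extension $\vw^T\vy + c$. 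Hence the affine formula is valid on all of $\overline{\scr{L}}(\vx_i)$.

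Next I would verify that $\vx_i$ is feasible for the step LP. By definition $\vx_i$ activates exactly the same neurons as itself, so $\vx_i \in \scr{L}(\vx_i) \subseteq \overline{\scr{L}}(\vx_i)$, and $\vx_i \in P$ by the inductive hypothesis, giving $\vx_i \in \overline{\scr{L}}(\vx_i) \cap P$. Thus the LP $\max \vw^T \vy$ subject to $\vy \in \overline{\scr{L}}(\vx_i) \cap P$ is feasible; since $P$ is compact and $\overline{\scr{L}}(\vx_i)$ is closed, the feasible set is compact and the maximum is attained at some $\vx_{i+1} \in \overline{\scr{L}}(\vx_i) \cap P$. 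Because $\vx_i$ is itself feasible, optimality forces $\vw^T \vx_{i+1} \geq \vw^T \vx_i$, and since $\vx_{i+1} \in \overline{\scr{L}}(\vx_i)$ the affine formula applies to yield $F_N(\vx_{i+1}) = \vw^T \vx_{i+1} + c \geq \vw^T \vx_i + c = F_N(\vx_i)$. Chaining over $i$ gives $F_N(\vx') = F_N(\vx_n) \geq F_N(\vx_0) = F_N(\vx)$, while $\vx' \in P$ follows because every iterate satisfies the constraint $\vy \in P$ of its defining LP.

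The main obstacle is the affinity-on-the-closure step. The statement ``gradient constant on $\scr{L}(\vx_i)$'' comes directly from the prior lemma, but the LP optimizes over the closure $\overline{\scr{L}}(\vx_i)$, where boundary points may have a different activation pattern (some neuron input equals exactly $0$), so one cannot blindly reuse the open-region formula. The resolution is precisely continuity of the ReLU network together with the fact that a ReLU evaluates to $0$ at input $0$ regardless of which neighbouring piece is chosen, so the affine extension is consistent across the boundary and evaluating $F_N(\vx_{i+1})$ through the gradient $\vw$ is legitimate. Everything else — feasibility, attainment of the LP maximum, and the telescoping — is routine bookkeeping.
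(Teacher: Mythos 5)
Your proof is correct; the paper itself states this lemma without proof (it is offered as an immediate consequence of the local-search construction), and your argument is exactly the natural formalization of that construction: feasibility of $\vx_i$ for the step LP over $\overline{\scr{L}}(\vx_i) \cap P$, affinity of $F_N$ on that region, and telescoping over the iterates. Your treatment of the one nontrivial point the paper glosses over --- that the affine formula valid on $\scr{L}(\vx_i)$ extends to the closure $\overline{\scr{L}}(\vx_i)$, justified both by continuity (using that $\scr{L}(\vx_i)$ is nonempty, so its topological closure is the non-strict relaxation) and by the observation that a ReLU at input $0$ agrees with both the identity and the zero branch --- is handled correctly.
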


\subsection{Global Search}

We will now detail the global search procedure.  The overall goal of
the global search procedure is to search for a point $\vx \in P$ such
that $F_N(\vx) \geq u$ for a given estimate $u$ of the current upper
bound.  The approach formulates a MILP feasibility problem, whose
real-valued variables are:

\begin{compactenum}
\item $\vx$: the inputs to the network with $n$ variables.
\item $\vz_1, \ldots, \vz_{k-1}$, the outputs of the hidden layer. Each $\vz_i$ is a vector
of $N$ variables.
\item $y$: the overall output of the network.
\end{compactenum}

Additionally, we introduce binary ($0/1$) variables
$\vt_1, \ldots, \vt_{k-1}$, wherein each vector $\vt_i$ has the same
size as $\vz_i$. These variables will be used to model the piecewise
behavior of the ReLU units.

Now we encode the constraints. The first set of constraints ensure that
$ \vx \in P$. Suppose $P$ is defined as $A \vx \leq \vb$ then we simply
add these as constraints.

We wish to add constraints so that for each hidden layer $i$,
$\vz_{i+1} = \sigma(W_i \vz_i + \vb_i)$ holds. Since $\sigma$ is not
linear, we use the binary variables $\vt_{i+1}$ to encode the same
behavior:
\[ \begin{array}{rl}
    \vz_{i+1} &  \geq  W_i \vz_i + \vb_i, \\
		 \vz_{i+1} & \leq W_i \vz_i + \vb_i + M \vt_{i+1}, \\
    \vz_{i+1}  & \geq 0, \\
      \vz_{i+1} & \leq M(\mathbf{1} - \vt_{i+1}) \\
   \end{array} \]
 
 Note that for the first hidden layer, we simply substitute $\vx$ for
 $\vz_0$.  This trick of using binary variables to encode piecewise
 linear function is standard in
 optimization~\cite[Ch.~22.4]{Vanderbei/2004/Linear}~\cite[Ch.~9]{Williams/2013/Model}.
 Here $M$ is taken to be a very large constant as a placeholder for
 $\infty$.  Additionally, we can derive fast estimates for $M$ by
 using the norms $||W_i||_{\infty}$ and the bounding box of the input
 polyhedron.

The output $y$ is constrained as: $y = W_{k} \vz_k + \vb_k$.
Finally, we require that $y \geq u$.

The MILP, obtained by combining these constraints, is a feasibility
problem without any objective.

\begin{lemma}
The MILP encoding is feasible if and only if there is an input $\vx \in P$ such
that $y = F_N(\vx) \geq u$.
\end{lemma}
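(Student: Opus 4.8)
```latex
\noindent\textbf{Proof proposal.}
The plan is to establish the two directions of the biconditional separately,
showing that the ``$\Leftarrow$'' direction (an actual network input gives a
feasible MILP assignment) follows from constructing a witness assignment, and
that the ``$\Rightarrow$'' direction (a feasible MILP assignment gives a valid
network input) follows from verifying that the binary variables correctly
enforce the ReLU semantics. The central object to analyze is the four-line
constraint block encoding $\vz_{i+1} = \sigma(W_i \vz_i + \vb_i)$, so most of
the work reduces to a case analysis on the binary variable $\vt_{i+1}$.

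\noindent First I would handle the forward direction. Suppose $\vx \in P$ with
$F_N(\vx) \geq u$. I would set the $\vz_i$ variables to the actual layer
outputs computed by evaluating the network on $\vx$, set $y = F_N(\vx)$, and
set each binary variable $\vt_{i+1,j}$ according to whether neuron $N_{i+1,j}$
is activated on input $\vx$: specifically $\vt_{i+1,j} = 0$ when the
pre-activation $W_i \vz_i + \vb_i$ is nonnegative in coordinate $j$ (the neuron
fires), and $\vt_{i+1,j} = 1$ otherwise. Then I would check, coordinate by
coordinate, that all four inequalities in the block hold, together with
$\vx \in P$, $y = W_k \vz_k + \vb_k$, and $y \geq u$. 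The verification splits
into the firing case and the non-firing case; in each, two of the four
inequalities become tight and the other two are slack by the choice of the
large constant $M$, so this direction is essentially a routine consistency
check once the witness is written down.

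\noindent The more delicate direction is the converse. Here I would assume a
feasible MILP assignment $(\vx, \vz_1, \ldots, \vz_{k-1}, y, \vt_1, \ldots,
\vt_{k-1})$ is given and argue that $\vz_{i+1} = \sigma(W_i \vz_i + \vb_i)$ is
forced in every coordinate, proceeding by induction on the layer index $i$. The
key combinatorial fact is that for each coordinate $j$, the binary value
$\vt_{i+1,j} \in \{0,1\}$ pins down which pair of the four inequalities is
binding. When $\vt_{i+1,j} = 0$, the constraints collapse to
$\vz_{i+1,j} \geq W_i \vz_i + \vb_i$ and $\vz_{i+1,j} \leq W_i \vz_i + \vb_i$,
forcing equality with the pre-activation (and the remaining two become
nonbinding provided $M$ is large enough); when $\vt_{i+1,j} = 1$, they collapse
to $\vz_{i+1,j} \geq 0$ and $\vz_{i+1,j} \leq 0$, forcing $\vz_{i+1,j} = 0$.
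Since $\sigma$ is exactly the max of these two candidate values, the assignment
must coincide with the true ReLU output at each neuron. The induction then
propagates this through all layers, and the constraint $y = W_k \vz_k + \vb_k$
together with $y \geq u$ yields $F_N(\vx) = y \geq u$ with $\vx \in P$.

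\noindent The main obstacle is the soundness of the ``big-$M$'' argument: the
case analysis on $\vt_{i+1,j}$ only correctly recovers the ReLU semantics if
$M$ genuinely dominates the range of every pre-activation $W_i \vz_i + \vb_i$
over all feasible assignments, so that the two intended-slack inequalities never
accidentally become binding and cut off a legitimate value. I would therefore
need to invoke the assumption that $P$ is compact, so that the pre-activations
are bounded, and appeal to the stated derivation of $M$ from $\|W_i\|_\infty$
and the bounding box of $P$ to guarantee $M$ is large enough. Making this
quantitative bound precise---rather than treating $M$ as a symbolic
$\infty$---is the one place where the proof could genuinely break if $M$ were
chosen too small, so I would state explicitly that $M$ is assumed to exceed the
maximal absolute pre-activation value, which is finite by compactness.
```
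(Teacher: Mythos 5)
Your proposal is correct, and it is worth noting that the paper offers \emph{no} proof of this lemma at all: the authors simply call the binary-variable encoding of piecewise-linear functions ``standard in optimization'' and cite textbooks, so you are supplying exactly the argument they leave implicit, and your decomposition (witness construction for one direction, case analysis on $\vt_{i+1,j}$ plus induction on layers for the other) is the standard and right one. Two points of precision you should repair when writing it up. First, in the converse direction the sentence ``since $\sigma$ is exactly the max of these two candidate values, the assignment must coincide with the true ReLU output'' does not follow from knowing $\vz_{i+1,j} \in \{w_j, 0\}$ alone, where $w_j$ denotes the $j$th component of $W_i \vz_i + \vb_i$: you also need the sign conditions, which the constraints you already listed supply --- when $\vt_{i+1,j} = 0$ the retained inequality $\vz_{i+1,j} \geq 0$ forces $w_j \geq 0$, and when $\vt_{i+1,j} = 1$ the retained inequality $\vz_{i+1,j} \geq w_j$ forces $w_j \leq 0$; with these, $\vz_{i+1,j} = \sigma(w_j)$ holds in both cases. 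Second, your closing paragraph attaches the big-$M$ worry to the wrong direction: soundness (feasible assignment $\Rightarrow$ ReLU semantics) goes through for \emph{any} $M \geq 0$, since a feasible point satisfies all four inequalities and the equalities above are forced regardless of the size of $M$. It is only completeness --- your witness construction in the forward direction --- that fails if $M$ does not dominate every $|w_j|$ over $\vx \in P$, because the two intended-slack constraints $\vz_{i+1} \leq W_i \vz_i + \vb_i + M \vt_{i+1}$ and $\vz_{i+1} \leq M(\mathbf{1} - \vt_{i+1})$ could then exclude the legitimate assignment and render the MILP infeasible even though a point with $F_N(\vx) \geq u$ exists. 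Your appeal to compactness of $P$ and to the paper's derivation of $M$ from $\|W_i\|_\infty$ and the input bounding box is the correct fix; it just belongs to the ``$\Leftarrow$'' direction, not to the recovery of ReLU semantics from feasibility.
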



\section{Experimental Evaluation}\label{sec:exp}
We first describe our C++ implementation of the ideas described thus
far, called \sherlock.  \sherlock combines local search with the
commercial MILP solver Gurobi, freely available for academic
use~\cite{gurobi}.  The tolerance parameter $\delta$ is fixed to $10^{-3}$.

For comparison with Reluplex, we used the proof of concept
implementation available at \cite{reluplex_impl}. However, at its
core, Reluplex does not compute intervals. Therefore, we use a
bisection scheme to repeatedly query Reluplex with ever smaller
intervals until, we have narrowed down the actual range within a
tolerance $\hat{\delta} = 10^{-2}$. 
\begin{table*}[t]
  \caption{Performance on randomly generated neural networks. \textbf{Legend:} $n$: \# inputs, $k$: \# hidden layers, $N$: \# neurons/layer, $s$:
    sparsity fraction, $N_c$: successfully solved (out of 100), $T$: running time, \# Iters: number of iterations. The number $(*m)$ denotes
    that Reluplex solved $m$ instances that were not solved by our approach. All experiments were run on a Linux workstation running
    Ubuntu 17.04 with $64$ GB RAM and $23$ cores.}\label{tab:microbenchmarks-1}
  \begin{tabular}{|cccc|a |aaa| aaa|c| ccc|}
    \hline
    \multicolumn{4}{|c|}{Network Params.} & \multicolumn{7}{a|}{\sherlock} & \multicolumn{4}{c|}{Reluplex}\\
    \hline
    $n$ & $k$ & $N$ & $s$ & $N_c$ & \multicolumn{3}{a|}{$T$} & \multicolumn{3}{a|}{\# Iters} & $N_c$ &   \multicolumn{3}{c|}{$T$} \\
    \cline{6-11}\cline{13-15}
        &     &     &     &       &  avg  & min & max & avg & min & max &   & avg & min & max \\
    \hline
2 &	2 &	10 &	0.5 &	100 & 	0.47&   0.0 &  1.95 & 3.5 & 2& 17 & {66} & 2.5 &  1.0 &  2.6 \\[3pt]
3 &	2 &	10 &	0.5 &	100 &	0.72& 0.0& 4.6	& 4.8&2&15 & {71} & 4& 0.85& 11.0\\[3pt]
4 &	5 &	10 &	0.5 &	100 &	4.6& 0.01& 25.2	& 4.8&2&19 & {53} & 207.5&21.8&1036.3 \\[3pt]
5 &	8&	10 &	0.05 &	100 &	0.01&0&0.02 &	2&2&2	& {3}	& 1.65&1.2&1.89\\[3pt]
5&	8&	10&	0.2 &	100 &	0.16&0.01& 1.93	& 2&2&3&	{7}&	6.1&1.9&11\\[3pt]
5 &	8&	10&	0.5&	{64} & 40& 0.03& 1023.4 &4.5& 2& 31 & {12} (*8) &	621.6&121.7&1025.7\\[3pt]
5 &	8 &	10 &	0.8 &	{44} &	167&0.18&1068.8	& 3&2&11 & \textcolor{red}{\bf 0} & x & x & x \\[3pt]	
5 &	8 &	10 &	0.95 &	{40} &  51&0.2& 1201 &	2.5&2&4	& \textcolor{red}{\bf 0} & x & x & x \\[3pt]
10 &	2 &	10 & 	0.5 &	99 &	1.65&0.0 & 7.74 &	24&2&104	& 86 (*1) &	42.1&1.6&339.7\\[3pt]
10 &	5 &	10 & 	0.5 &	91 &	16.5& 0.02& 86.4 &	22&2&407 &	{7} &	433.2& 23.2& 1111.2\\[3pt]
10 &	5 &	50 & 	0.05 &	99 &	6.7&0.08&76.5 &	3&2&12	& {32} (*1) &	70.3&9&452\\[3pt]
10& 	5 &	50 &	0.2 &	{8} &	354.8&0.2& 1429.1 &	2&2&3	& \textcolor{red}{\bf 0} & x & x & x \\[3pt]		
10& 	5 &	50 &	0.4 & 	{8} &	35.4&1.6&114.5 &	3&3&3	& \textcolor{red}{\bf 0} & x & x & x \\[3pt]		
10& 	5 & 	50 & 	0.5 & 	\textcolor{red}{\bf 0} & 	x & x & x &  x & x & x & \textcolor{red}{\bf 0} & x & x & x \\[3pt]	
    \hline
  \end{tabular}
\end{table*}
\subsection{Randomly Generated Networks}
First, we generate numerous randomly generated networks with $1$
output by varying the number of inputs ($n$), the number of hidden
layers ($k$), the number of neurons per hidden layer ($N$) and the
fraction of non-zero edge weights ($s$). Columns 1-4 of
Table~\ref{tab:microbenchmarks-1} shows the set of values chosen for
$\tupleof{n,k, N, s}$. For each chosen set, we generated $100$ random
neural networks. The zero weight edges were chosen by flipping a coin
with probability $1-s$ of choosing a weight to be zero. The nonzero
weights were uniformly chosen in the range $[-1,1]$. Next, we compare
our approach against Reluplex solver assuming that the inputs are in
the range $[-1,1]$. Each tool is run with a timeout of $20$
minutes. Table~\ref{tab:microbenchmarks-1} shows the comparison.
\sherlock outperforms Reluplex in terms of the number of examples
completed within the given timeout.  However, we note that many of the
Reluplex instances failed before the timeout due to an ``error''
reported by the solver. From the table we conclude that the total
number of neurons and the sparsity of the network have a large effect
on the overall performance. In particular, no approach is able to
handle dense examples with $250+$ neurons.

\begin{figure*}[t]
\centering
\includegraphics[width=5cm,height=5cm]{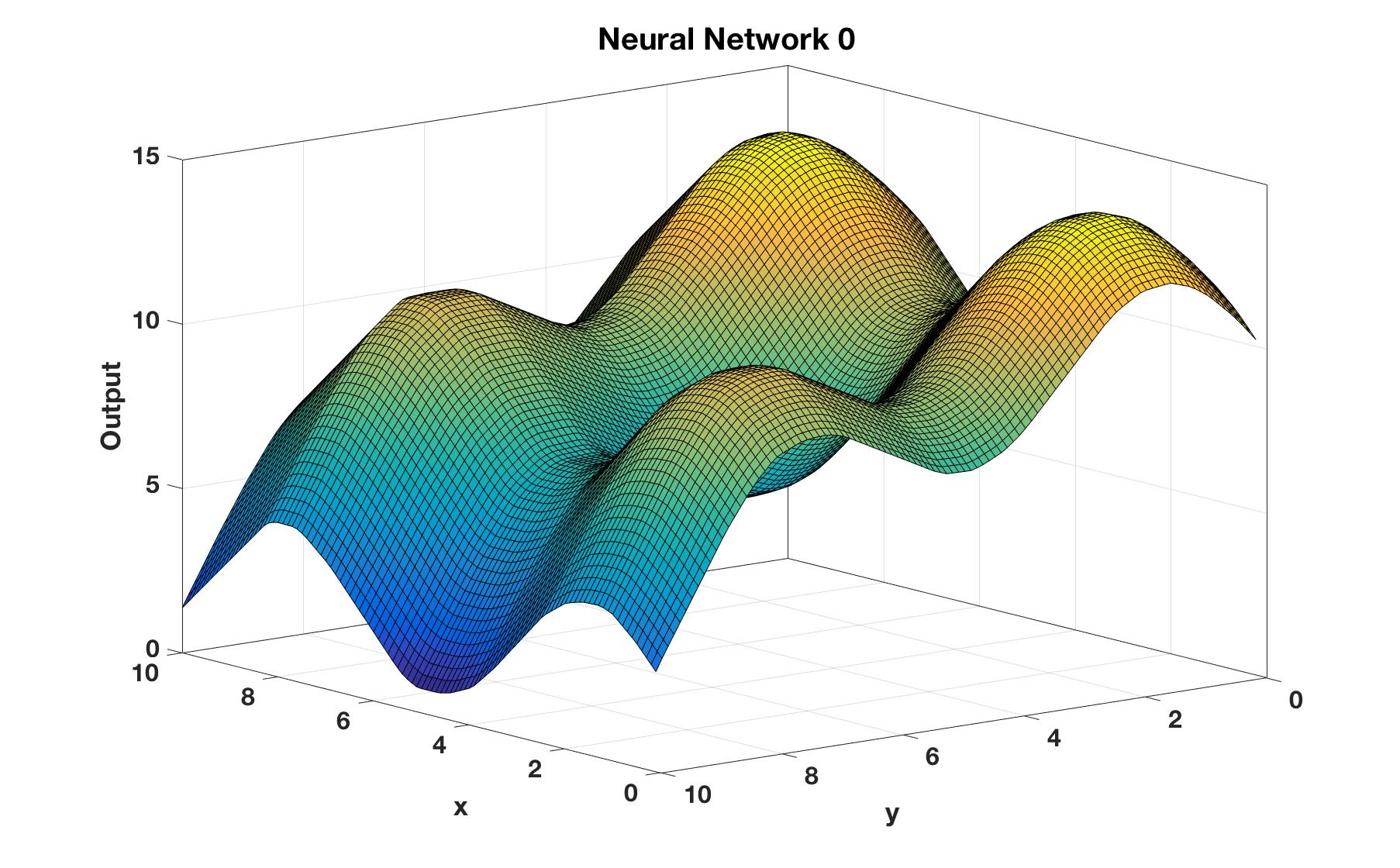}
\includegraphics[width=5cm,height=5cm]{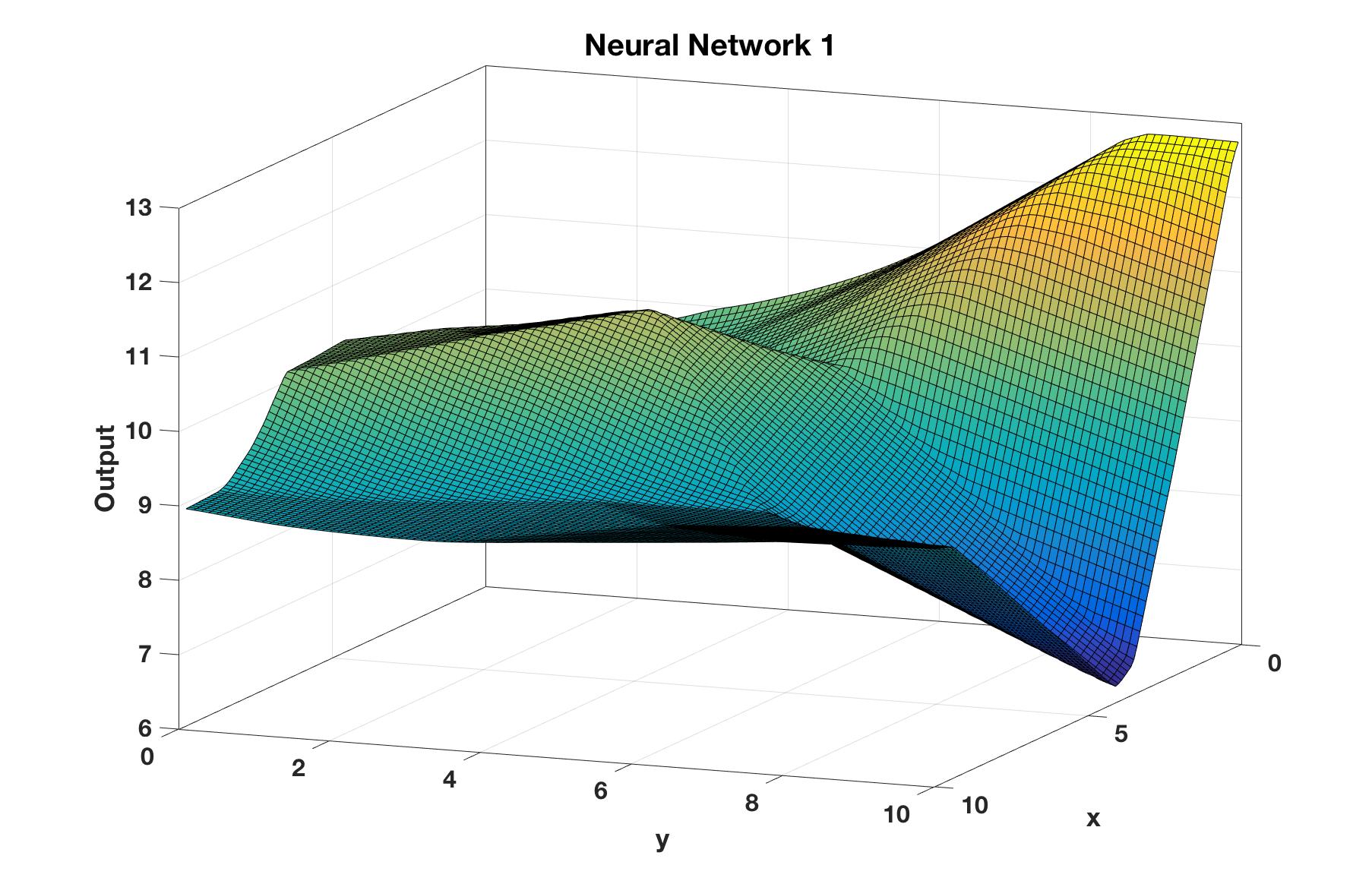}
\includegraphics[width=5cm,height=5cm]{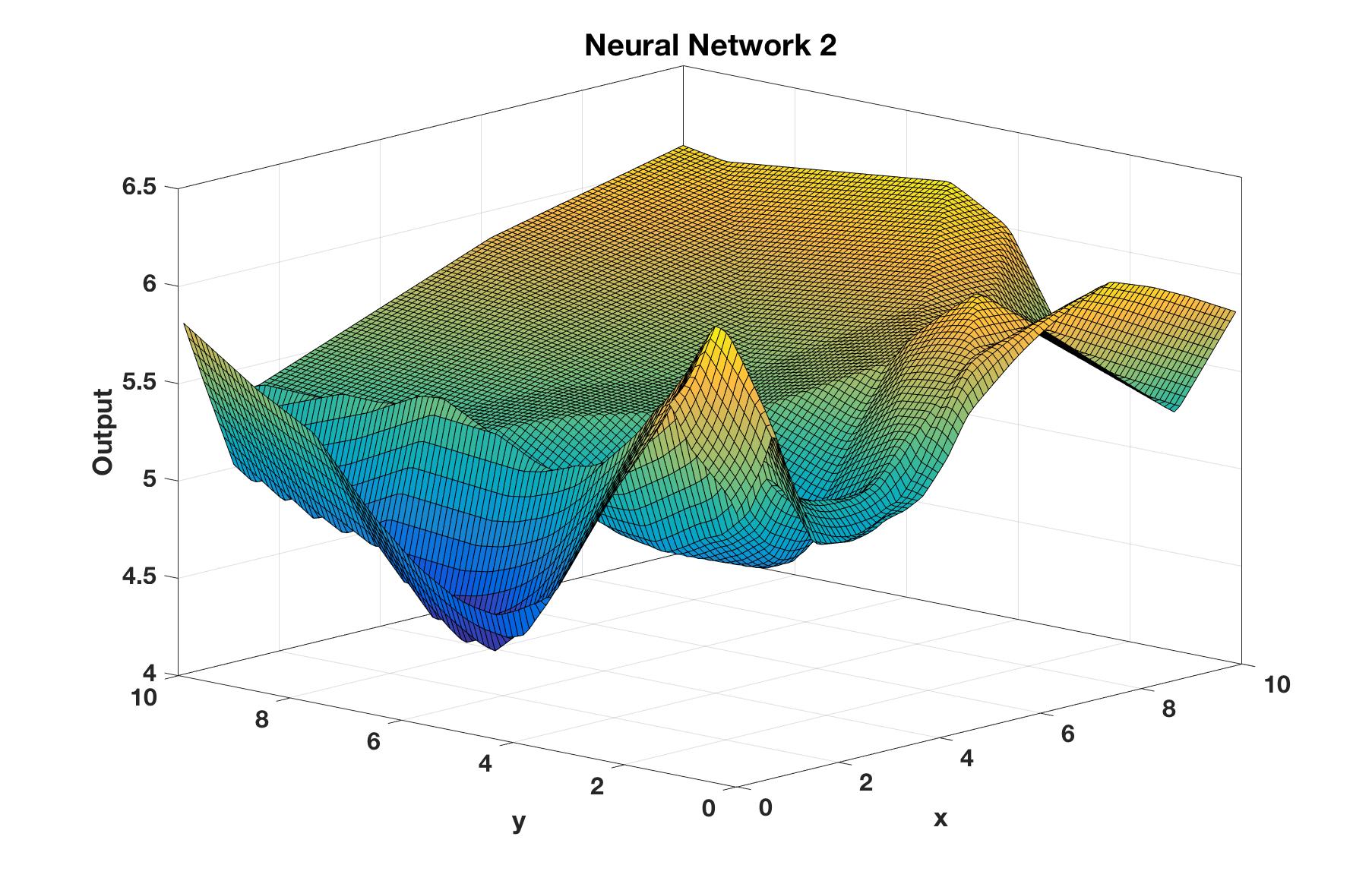}
\includegraphics[width=5cm,height=5cm]{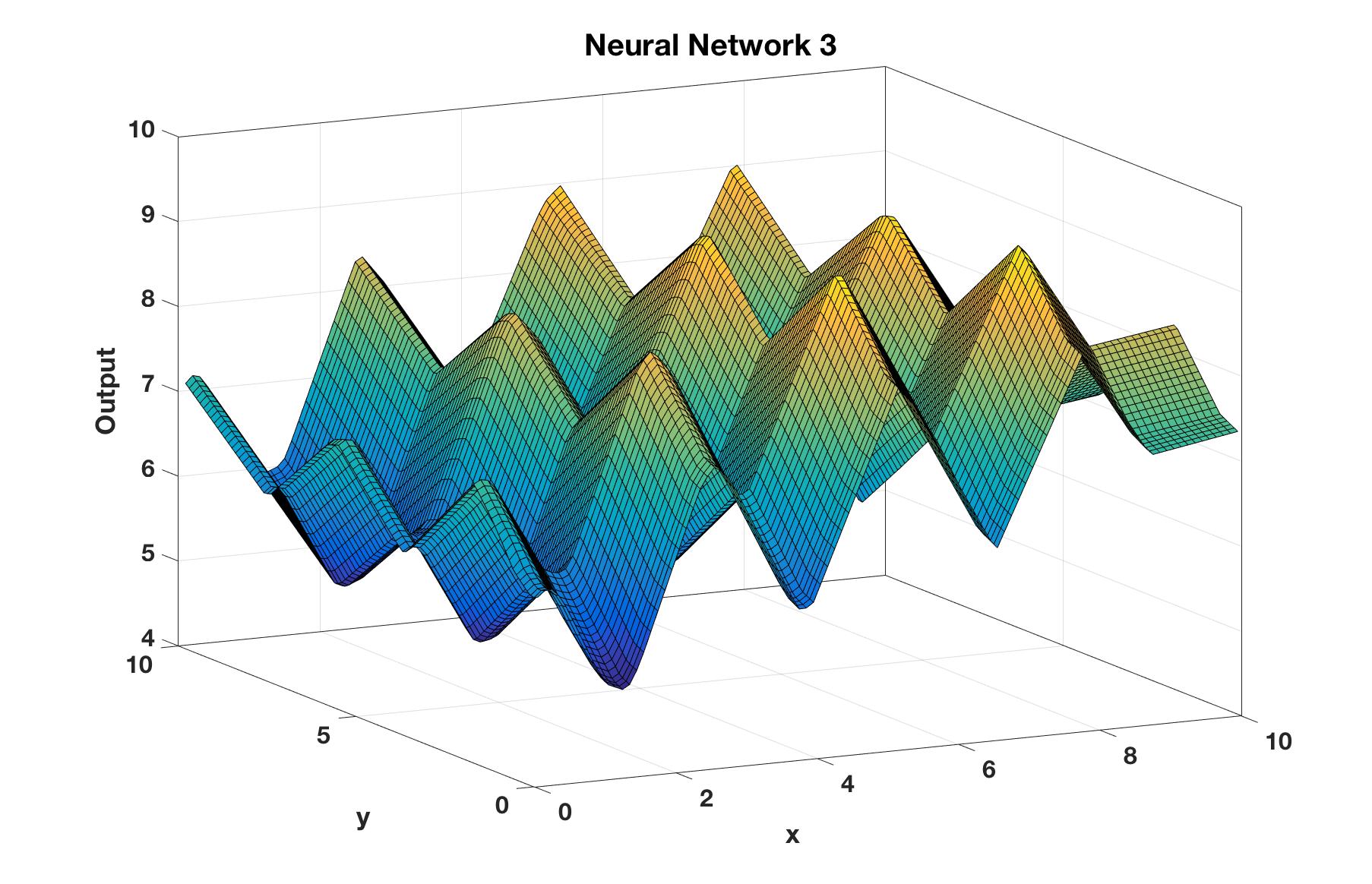}
\includegraphics[width=5cm,height=5cm]{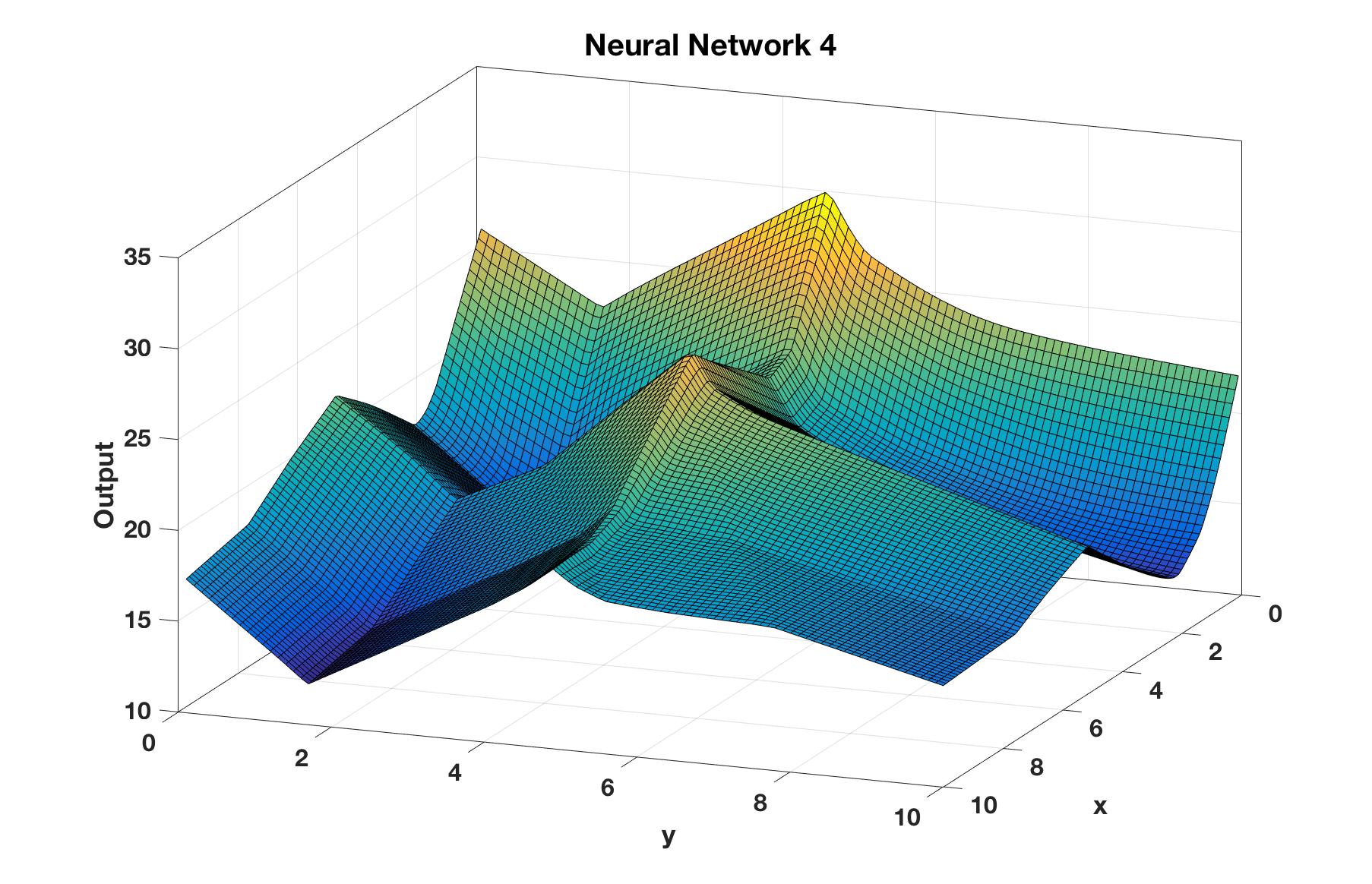}
\includegraphics[width=5cm,height=5cm]{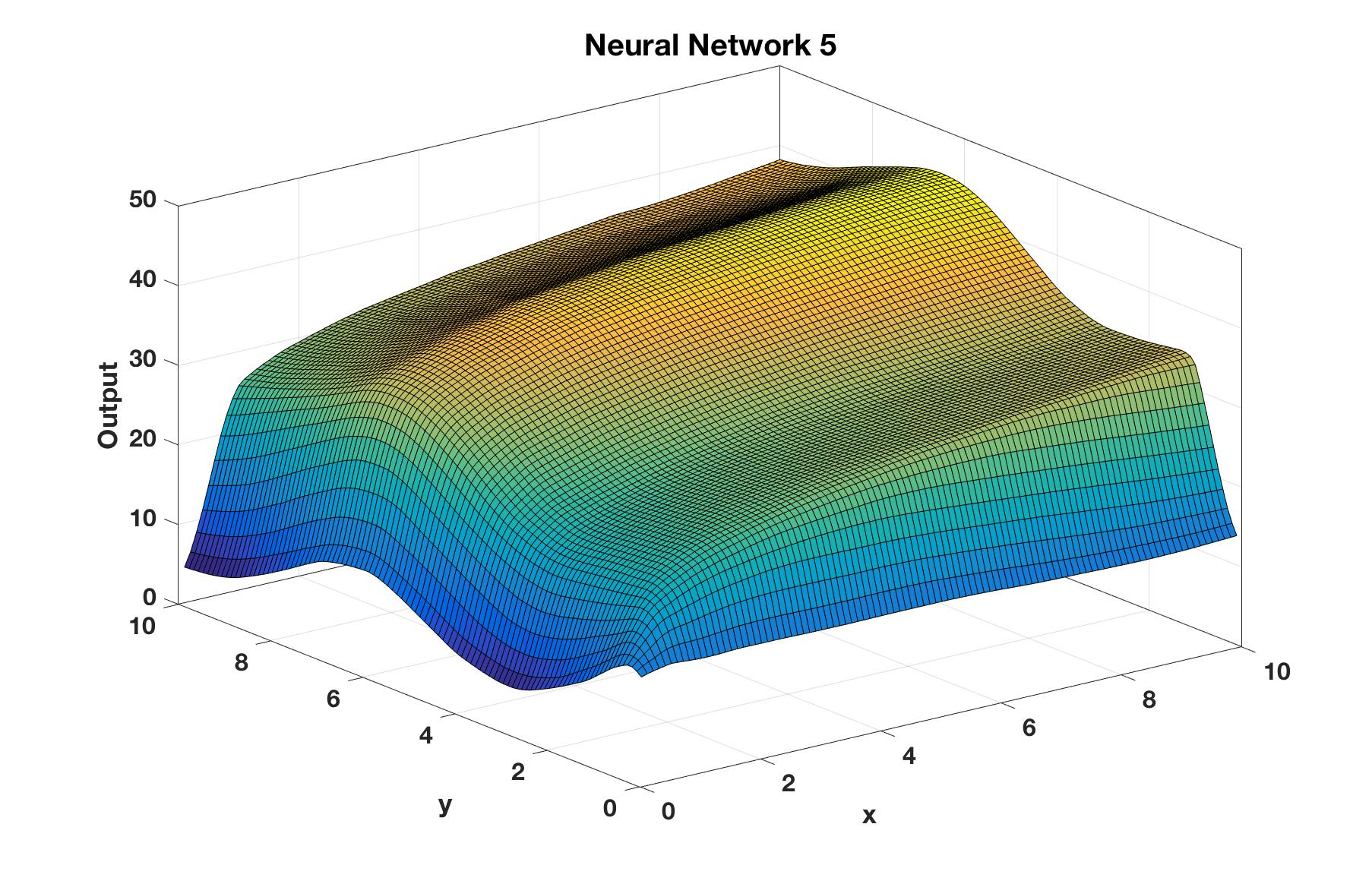}
\caption{Plots of the benchmark functions used to train the neural networks shown in Table~\ref{tab:microbenchmarks-2}.}
\label{fig_micro_benchmarks}
\end{figure*}

\subsection{Microbenchmarks on Known Functions}

Next, we consider a set of $6$ microbenchmarks that consist of
networks trained on known $2$ input, single output functions shown in
Figure~\ref{fig_micro_benchmarks}. These functions provide varying
numbers of local minima. We then sampled inputs/output pairs, and
trained a neural network with $1$ hidden layer for the first $5$
functions and $6$ hidden layer for the last. Next, we compared
\sherlock against the Reluplex solver in terms of the ability to
predict the output range of the function given the input range
$[-1,1]^2$ for the two inputs. 

Table~\ref{tab:microbenchmarks-2}
summarizes the results of our comparison. We note that in all cases,
our approach can solve the resulting problem within $10$
minutes. However, with a timeout of $1$ hour, Reluplex is unable to
deduce a range for $4$ out of $6$ instances.

\begin{table}[t]
  \caption{Performance results on networks trained from the $2$ input
    functions shown in
    Fig.~\ref{fig_micro_benchmarks}. \textbf{Legend:} $k$ number of
    layers, $N$: number of neurons/layer, $T_{s}$: Time taken by
    \sherlock, $T_{rplex}$: Time taken by
    Reluplex solver.  All results in this table were obtained on a
    Macbook Pro running ubuntu 16.04 with
   3 cores and 8 GB
    RAM.}\label{tab:microbenchmarks-2}
\begin{tabular}{ c c a c }
 \hline
 ID  & $k \times N$ & $T_{s}$  & $T_{rplex}$ \\
 \hline
$N_0$ & 1 $\times$ 100 & 1.9s &   1m 55s\\[3pt]
 $N_1$ & 1 $\times$ 200 & 2.4s &   13m 58s\\[3pt]
 $N_2$ & 1 $\times$ 500 & 17.8s &   timeout\\[3pt]
 $N_3$  & 1 $\times$ 500 & 7.6s &   timeout\\[3pt]
 $N_4$ & 1 $\times$ 1000 & 7m 57.8s &   timeout\\[3pt]
 $N_5$ & 6 $\times$ 250 & 9m 48.4s &   timeout\\[3pt]
 \hline
\end{tabular}
\end{table}

\subsection{Illustrative Applications}

We illustrate the use of \sherlock to infer properties of deep NNs for
important applications, starting with a neural network that controls a
nonlinear system. As mentioned earlier, such networks are increasingly
popular using deep policy learning~\cite{kahn2016plato}.

We trained a NN to control a nonlinear plant model (Example 7 from
\cite{control_example}) whose dynamics are describe by the ODE:
\[ \dot{x} = z^3 - y,\ \dot{y} = z,\ \dot{z} = u\,.\] We first devise
a model predictive control (MPC) scheme to stabilize this system to
the origin, and train the NN by sampling inputs from the state space
$X:\ [-0.5, 0.5]^3$ and using the MPC to provide the corresponding
control.

We trained a 3 input, 1 output network, with 2 hidden layers, having
300 neurons in the first layer and 200 neurons in the second layer.
For any state $(x,y,z) \in X$, we seek to know the range of the output
of the network to deduce the maximum/minimum control input. This is a
direct application of the range estimation problem. \sherlock is able
to deduce a tight range for the control input: $[-2.68772, 3.00263]$.

\subsection{Handwriting Recognition Networks} 
We illustrate an application of range estimation to large neural networks involved in
pattern classification, wherein, we wish to infer classification labels for
images. The MNIST handwriting recognition data set has emerged as a prototype
application~\cite{mnist}.

Given a neural network classifier $N$, and a given image input $I$, we
wish to explore a set of possible perturbations to the image to find a
nearby input that can alter the label that $N$ provides to the
perturbed image. Alternatively, given a space of possible
perturbations, we wish to prove that the network $N$ is \emph{robust}
to these perturbations: i.e, no perturbation can alter the label that
$N$ provides to the perturbed image.

We trained a NN with $28 \times 28$ inputs, and $10$ outputs, and $3$
hidden layers with $200, 100$ and $50$ neurons in the layers
respectively on the MNIST digit recognition data set~\cite{mnist}.  The
network has $10$ outputs that are then fed to a ``softmax'' output
layer to provide the final classification. For the purposes of our
application, we remove this softmax layer and examine these $10$
outputs that represent numerical scores corresponding to the digits
$0-9$.

We first use our approach to generate adversarial perturbations to the
images, which can flip the output. To do so, we define for each image
a set of pixels and a range of possible perturbation of these pixels.
This forms a polyhedron that we will call the perturbation space.
Next, we use our approach to explore points in this perturbation
space, stopping as soon as we obtain a point with a different label.
This is achieved simply using local search iterations. 
 Figure~\ref{fig_controlled_pert}, demonstrates such
an effect, we started with images that are classified correctly
as the digits ``0'' and ``1''. The perturbations are applied
to selected pixels resulting in  the digit ``0''  labeled as
``8'',  and ``1'' labeled as ``2''.

\begin{figure}[h!]
\begin{center}
\includegraphics[scale = 0.2]{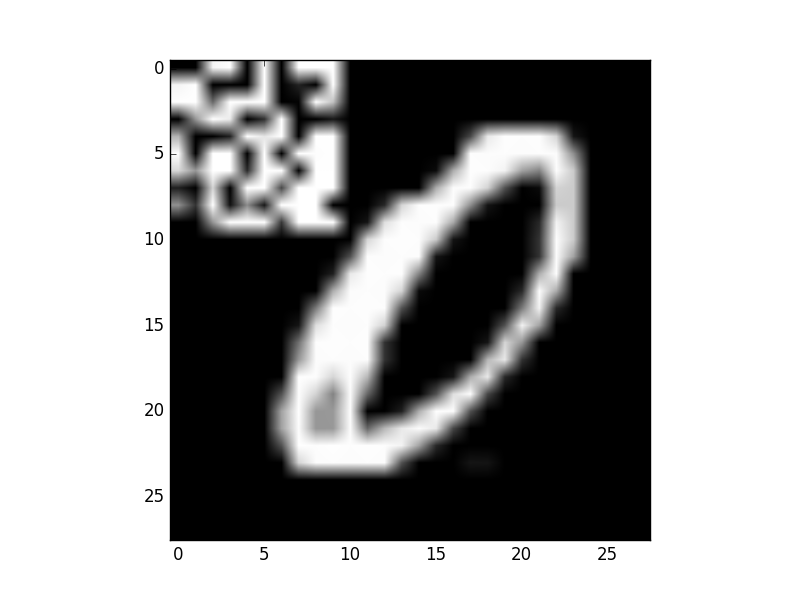} 
\includegraphics[scale = 0.2]{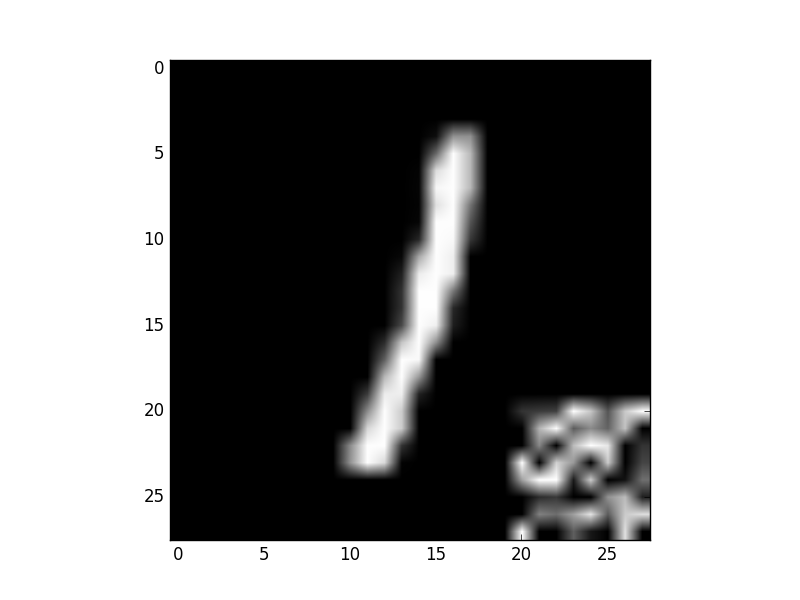} 
\end{center}
\caption{(\textbf{Left}) Perturbing pixels in the top right corner of a  28 $\times$ 28 image changes the network's output  label from $0$ to $8$. (\textbf{Right}) Perturbation changes label from $1$ to $2$.}
\label{fig_controlled_pert}
\end{figure}

\begin{figure}[h!]
\begin{equation*}
\begin{array}{rl|rl}
range(l_0) &\subseteq [0.0, 0.763] & range(l_1) &\subseteq  [ 0, 0 ] \\
range(l_2) &\subseteq [0, 0] &  range(l_3) &\subseteq [ 3.365, 5.71] \\
range(l_4) &\subseteq [0,0] & \color{red}range(l_5) &\subseteq \color{red}[9.62, 13.4] \\ 
range(l_6) &\subseteq [1.24, 2.25] & range(l_7) &\subseteq [0.016, 0.592] \\
range(l_8) &\subseteq [2.31, 3.5] & range(l_9) &\subseteq [3.39, 5.3] \\
\end{array}
\end{equation*}
\caption{Output range computed by \sherlock for all possible perturbations of $5\times 10$ block as shown in Figure~\ref{fig_robust_pert}.}\label{Fig:sherlock-image-ranges}
\end{figure}

\begin{figure}[h!]
\centering
\includegraphics[scale=0.3]{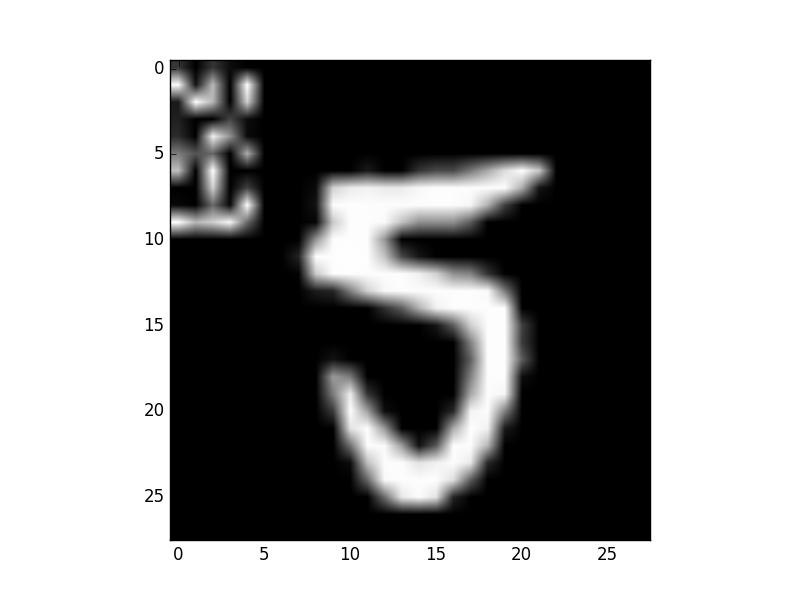} 
\caption{The image above shows the effect of perturbation on an image that the trained NN is robust to. The perturbation were only allowed to effect a block of size $5 \times 10$ on the top left of the image}
\label{fig_robust_pert}
\end{figure} 

However, rather finding a perturbation, we wish to \emph{prove}
robustness to a class of perturbations. I.e, for all possible
perturbations, the label ascribed by the network to an image
remains the same.

We chose an image labeled as $5$ and specified a set of perturbations to
a block of size $5 \times 10$ on the top left corner of the image. One
such perturbation is shown in Figure~\ref{fig_robust_pert}. We used
\sherlock to compute a range over the output labels for all such perturbations.
Figure~\ref{Fig:sherlock-image-ranges} shows the output ranges for each label.
It is clear that all perturbations will lead the softmax layer to choose the label
$5$ over all other labels for the image.

\section{Conclusion}\label{sec:conc}
Thus, we have presented a combination of local and global search for
estimating the output ranges of neural networks given constraints on
the input. Our approach has been implemented inside the tool \sherlock
and compared the results obtained with the solver Reluplex. We have
also demonstrated the application of our approach to interesting
applications to control and classification problems.

In the future, we wish to improve \sherlock in many directions,
including the treatment of recurrent neural networks, handling
activation functions beyond ReLU units and providing faster
alternatives to the MILP for global search.

\bibliographystyle{aaai}
\bibliography{references}

\end{document}